\newtheoremstyle{break}
  {\topsep}{\topsep}%
  {\itshape}{}%
  {\bfseries}{}%
  {\newline}{}%
\theoremstyle{plain}
\newtheorem{thm}{Theorem}[section] 
\theoremstyle{definition}
\theoremstyle{break}
\newcommand*\mat[1]{\mathbf{#1}}
\newcommand*\vv[1]{\mathbf{#1}}
\newcommand*\ten[1]{\bm{\mathcal{#1}}}
\newcommand{\shrink}{\mathcal{S}}
\newcommand{\vvec}{\mathrm{vec}}
\newcommand*\inner[2]{\langle #1, \, #2 \rangle}
\newcommand*\trp[1]{ #1^{\mathsf{T}} }
\newcommand*\rk[1]{\mathrm{rank}(#1)}
\newcommand{\st}{\mathrm{s.t}}
\newcommand{\RR}{\mathbb{R}}
\newcommand*\Fro[1]{ || #1 ||_{\mathrm{F}} }
\newcommand*\Nuc[1]{ || #1 ||_{*} }
\newcommand*\One[1]{ || #1 ||_{1} }
\newcommand*\Two[1]{ || #1 ||_{2} }
\newcommand*\Twosq[1]{ || #1 ||_{2}^2 }
\newcommand*\Frosq[1]{ || #1 ||_{\mathrm{F}}^2 }
\newcommand{\A}{\mat{A}}
\newcommand{\At}{\trp{\A}}
\newcommand{\B}{\mat{B}}
\newcommand{\Bt}{\trp{\B}}
\newcommand{\C}{\mat{C}}
\newcommand{\D}{\mat{D}}
\newcommand{\R}{\mat{R}}
\newcommand{\X}{\mat{X}}
\newcommand{\Z}{\mat{Z}}
\newcommand{\E}{\mat{E}}
\newcommand{\LL}{\mat{\Lambda}}
\newcommand{\II}{\mat{I}}
\newcommand{\U}{\mat{U}}
\newcommand{\V}{\mat{V}}
\newcommand{\Ri}{\mat{R}_i}
\newcommand{\Ei}{\mat{E}_i}
\newcommand{\XXi}{\mat{X}_i}
\newcommand{\Yi}{\mat{Y}_i}
\newcommand{\LLi}{\mat{\Lambda}_i}
\newcommand{\Ki}{\mat{K}_i}
\newcommand{\tX}{\ten{X}}
\newcommand{\tE}{\ten{E}}
\newcommand{\tL}{\ten{L}}
\newcommand{\tY}{\ten{Y}}
\newcommand{\tK}{\ten{K}}
\newcommand{\tR}{\ten{R}}
\newcommand\Tstrut{\rule{0pt}{2.6ex}}         
\newcommand\Bstrut{\rule[-0.9ex]{0pt}{0pt}}   
\newcommand*\mypar[1]{\vspace{0.5em} \noindent \textbf{#1} \hspace{0.5em}}
\ificcvfinal\pagestyle{empty}\fi
\begin{document}

\title{Robust Kronecker-Decomposable Component Analysis for Low-Rank Modeling}

\author{Mehdi Bahri\textsuperscript{1} \quad \quad Yannis Panagakis\textsuperscript{1,2} \quad \quad Stefanos Zafeiriou\textsuperscript{1,3}\\
\textsuperscript{1}Imperial College London, UK \quad \textsuperscript{2}Middlesex University, London, UK \quad \textsuperscript{3}University of Oulu, Finland\\
{\tt\small mehdi.b.tn@gmail.com, \{i.panagakis, s.zafeiriou\}@imperial.ac.uk}
}

\maketitle

\begin{abstract}
 Dictionary learning and component analysis are part of one of the most well-studied and active research fields, at the intersection of signal and image processing, computer vision, and statistical machine learning. In dictionary learning, the current methods of choice are arguably K-SVD and its variants, which learn a dictionary (i.e\onedot, a decomposition) for sparse coding via Singular Value Decomposition. In robust component analysis, leading methods derive from Principal Component Pursuit (PCP), which recovers a low-rank matrix from sparse corruptions of unknown magnitude and support.  However, K-SVD is sensitive to the presence of noise and outliers in the training set. Additionally, PCP does not provide a dictionary that respects the structure of the data (e.g\onedot, images), and requires expensive SVD computations when solved by convex relaxation. In this paper, we introduce a new robust decomposition of images by combining ideas from sparse dictionary learning and PCP. We propose a novel Kronecker-decomposable component analysis which is robust to gross corruption, can be used for low-rank modeling, and leverages separability to solve significantly smaller problems. We design an efficient learning algorithm by drawing links with a restricted form of tensor factorization. The effectiveness of the proposed approach is demonstrated on real-world applications, namely  background subtraction and image denoising, by performing  a thorough comparison with the current state of the art.
\end{abstract}

\section{Introduction}

Sparse dictionary learning \cite{Rubinstein2010, Wright2010, Olshausen1997} and Robust Principal Component Analysis (RPCA) \cite{Candes2011a} are two popular methods of learning representations that assume different structure and answer different needs, but both are special cases of structured matrix factorization. Assuming a set of $N$ data samples $\mathbf{x}_1,\ldots,\mathbf{x}_n$ represented as the columns of a matrix $\mathbf{X}$, we seek a decomposition of $\X$ into meaningful components of a given structure, in the generic form:
\begin{equation}\label{eq:structured_mf_generic}
\min_\Z l(\X, \Z) + g(\Z)
\end{equation}
Where $l(\cdot)$ is a loss function, generally the $\ell_2$ error, and $g(\cdot)$ a possibly non-smooth regularizer that encourages the desired structure, often in the form of specific sparsity requirements.

When $\Z$ is taken to be factorized in the form $\D\R$, we obtain a range of different models depending on the choice of the regularization. Imposing sparsity on the dictionary $\D$ leads to sparse PCA, while sparsity only in the code $\R$ yields sparse dictionary learning. The current methods of choice for sparse dictionary learning are K-SVD \cite{Aharon2006} and its variants \cite{Mairal2008}, which seek an overcomplete dictionary $\mathbf{D}$ and a sparse representation $\mathbf{R} =[\mathbf{r}_1,\ldots,\mathbf{r}_n]$ by minimizing the following constrained objective:
\begin{equation}\label{eq:matching_pursuit_problem}
    \min_{\mathbf{R},\mathbf{D}} ||\mathbf{X}-\mathbf{D}\mathbf{R}||_\mathrm{F}^2,\,\, \mbox{s.t.}\,\, ||\mathbf{r}_i||_0 \leq T_0,
\end{equation}
where $||.||_\mathrm{F}$ is the  Frobenius norm and $||.||_0$ is the $\ell_0$ pseudo-norm, counting the number of non-zero elements. Problem (\ref{eq:matching_pursuit_problem}) is solved in an iterative manner that alternates between sparse coding of the data samples on the current dictionary, and a process of updating the dictionary atoms to better fit the data using the Singular Value Decomposition (SVD). When used to reconstruct images, K-SVD is trained on overlapping image patches to allow for overcompleteness \cite{Elad2006}. Problems of this form suffer from a high computational burden that limits their applicability to small patches that only capture local information, and prevent them from scaling to larger images. 

The recent developments in robust component analysis are attributed to the advancements in compressive sensing \cite{Candes2006, Donoho2006, Candes2005DecodingProgramming}.  In this area, the emblematic model is the Robust Principal Component Analysis, proposed in \cite{Candes2011a}. RPCA assumes that the observation matrix $\mathbf{X}$ is the sum of a low-rank component $\mathbf{A}$, and of a sparse matrix $\mathbf{E}$ that collects the gross errors, or outliers. Finding the minimal rank solution for $\A$ and the sparsest solution for $\E$ is combinatorial and NP-hard. Therefore, RPCA is obtained by solving the Principal Component Pursuit (PCP) in which the discrete rank and  $\ell_0$-norm functions are approximated by their  convex envelopes as follows:
\begin{equation}\label{eq:cvx_rpca}
    \min_{\mathbf{A},\mathbf{E}} ||\mathbf{A}||_* + ||\mathbf{E}||_1,\,\, \mbox{s.t.}\,\, \mathbf{X} = \mathbf{A} + \mathbf{E}
\end{equation}
where $||.||_*$ is the nuclear norm and $||.||_1$ is the standard $\ell_1$ norm. Like sparse dictionary learning, Robust PCA is a special case of (\ref{eq:structured_mf_generic}). The nuclear-norm relaxation is convex but the cost of performing SVDs on the full-size matrix $\A$ is high. Factorization-based formulations exploit the fact that a rank-$r$ matrix $\A \in \RR^{m \times n}$ can be decomposed in $\A = \U \trp{\V}$, $\U \in \RR^{m \times r}, \V \in \RR^{n \times r}$, to impose low-rankness. These formulations are non-convex and algorithms may get stuck in local optima or saddle points. We refer to \cite{DBLP:journals/corr/LiWLAHLZ16} for recent developments on the convergence of non-convex matrix factorizations, \cite{2016arXiv160306610W, 2016arXiv160603168P, 2017arXiv170207945Z, 2017arXiv170206525Z} for low-rank problems, and \cite{DBLP:conf/nips/GeLM16} specifically for matrix completion.

\subsection{Separable dictionaries}


Analytical separable dictionaries have been proposed as generalizations of 1-dimensional transforms in the form, \eg, of tensor-product wavelets, and have since fallen out of flavour for models that drop orthogonality in favour of overcompleteness to achieve geometric invariance \cite{Rubinstein2010}. However, recent work in the compressed sensing literature has shown a regain of interest for separable dictionaries for very high-dimensional signals where scalability is a hard-requirement. Notably, high-resolution spatial angular representations of diffusion-MRI signals, such as in HARDI (\textit{High Angular Resolution Diffusion Imaging}), represent each voxel by a 3D signal, yielding prohibitive sampling times. Traditional compressed sensing techniques can only handle signal sparsity bounded below by the number of voxels (1 atom per voxel) and still cannot meet the needs of medical imaging, but Kronecker extensions \cite{Schwab2016, 2016arXiv161205846S} of Orthogonal Matching Pursuit (OMP) \cite{342465, doi:10.1117/12.173207}, Dual ADMM \cite{Boyd2010}, and FISTA \cite{Beck2009} allow sparser signals.

The recent Separable Dictionary Learning (SeDiL) \cite{Hawe2013} considers a dictionary that factorizes into the Kronecker product of two smaller dictionaries $\A$ and $\B$, and matrix observations $\mathcal{X} = (\XXi)_i$. The observations have sparse representations $\mathcal{R} = (\Ri)_i$ in the bases $\A, \B$, and the learning problem is recast as:
\begin{equation}\label{eq:sedil}
\min_{\A, \B, \tR} \frac{1}{2} \sum_i \Frosq{\XXi - \A \Ri \Bt} + \lambda g(\tR) + \kappa r(\A) + \kappa r(\B)
\end{equation}
Where the regularizers $g$ and $r$ promote, respectively, sparsity in the representations, and low mutual-coherence of the dictionary $\D = \B \otimes \A$. Here, $\D$ is constrained to have orthogonal columns, \ie, the pair $\A, \B$ shall lie on the product manifold of two product of sphere manifolds. A different approach is taken in \cite{Hsieh2014}. A separable 2D dictionary is learnt in a two-step strategy similar to that of K-SVD. Each matrix observation $\XXi$ is represented as $\A \Ri \Bt$. In the first step, the sparse representations $\Ri$ are found by 2D OMP. In the second step, a CP decomposition is performed on a tensor of residuals via Regularized Alternating Least Squares to solve $\min_{\A, \B, \tR} \Fro{\tX - \tR \times_1 \A \times_2 \B}$\footnote{\cf notations for the product $\times_n$}.

\subsection{Contributions}

In this paper, we propose a novel method for separable dictionary learning based on a robust tensor factorization that learns simultaneously the dictionary and the sparse representations. We extend the previous approaches by introducing an additional level of structure to make the model robust to outliers. We do not seek overcompleteness, but rather  promote sparsity in the dictionary to learn a low-rank representation of the input tensor. In this regard, our method combines ideas from both Sparse Dictionary Learning and Robust PCA. We propose a non-convex parallelizable ADMM algorithm and provide experimental evidence of its effectiveness. Finally, we compare the performance of our method against several tensor and matrix factorization algorithms on computer vision benchmarks, and show our model systematically matches or outperforms the state of the art. We make the code available online\footnote{https://github.com/mbahri/KDRSDL} along with supplementary material.

\mypar{Notations}
Throughout the paper, matrices (vectors) are denoted by uppercase (lowercase) boldface letters \eg, $\mathbf{X}$, ($\mathbf{x}$). 
$\mathbf{I}$ denotes the identity matrix of compatible dimensions. The $i^{th}$ column of $\mathbf{X}$ is denoted as $\mathbf{x}_{i}$. Tensors are considered as the multidimensional equivalent of matrices (second-order tensors), and vectors (first-order tensors),
and denoted by bold calligraphic letters, \eg,  $\bm{\mathcal{X}}$. The
\textit{order} of a tensor is the number of indices needed to
address its elements. Consequently, each element of an $M^{th}$-order
tensor $\bm{\mathcal{X}}$ is addressed by $M$ indices, \ie,
$(\bm{\mathcal{X}})_{i_{1}, i_{2}, \ldots, i_{M}} \doteq x_{i_{1}, i_{2}, \ldots, i_{M}}$.

The sets of real and integer numbers are denoted by $\mathbb{R}$ and $\mathbb{Z}$, respectively.    An $M^{th}$-order  real-valued tensor $\bm{\mathcal{X}}$ is  defined over the
tensor space $\mathbb{R}^{I_{1} \times I_{2} \times \cdots \times
I_{M}}$, where $I_{m} \in \mathbb{Z}$ for $m=1,2,\ldots,M$. 

The mode-$n$  product of a tensor  $\bm{\mathcal{X}} \in
\mathbb{R}^{I_{1}\times I_{2}\times \ldots \times I_{M}}$ with a matrix $\mathbf{U} \in \mathbb{R}^{J \times I_{n}}$, denoted by
$\bm{\mathcal{X}} \times_{n} \mathbf{U}$, is defined element-wise as
\begin{equation}\label{E:Tensor_Mode_n}
(\bm{\mathcal{X}} \times_{m} \mathbf{x})_{i_1, \ldots, i_{n-1}, i_{n+1},
\ldots, i_{M}} = \sum_{i_n=1}^{I_n} x_{i_1, i_2, \ldots, i_{M}} x_{i_n}.
\end{equation}
The \textit{Kronecker } product of matrices $\mathbf{A} \in \mathbb{R}^{I \times K}$,
and $\mathbf{B} \in \mathbb{R}^{L \times M}$, is
denoted by $\mathbf{A} \otimes \mathbf{B}$, and yields a matrix of dimensions $I\cdot L\times K \cdot M$.

Finally, we define the tensor \textit{Tucker rank} as the vector of the ranks of its mode-$n$ unfoldings (\ie, its mode-$n$ ranks), and the tensor \textit{multi-rank} \cite{Zhang2014,Lu_2016_CVPR} as the vector of the ranks of its frontal slices. More details about tensors, such as the definitions of tensor slices and mode-$n$ unfoldings, can be found in  \cite{Kolda2009} for example.


\section{Model and updates}


We first formulate our structured factorization, and describe a tailored optimization procedure.

\subsection{Robustness and structure}

Consider the following Sparse Dictionary Learning problem with Frobenius-norm regularization on the dictionary $\D$, where we decompose $N$ observations $\vv{x}_i \in \RR^{mn}$ on $\D \in \RR^{mn \times r_1 r_2}$ with representations $\vv{r}_i \in \RR^{r_1 r_2}$:
\begin{align}\label{eq:sparse_dictionary_learning}
    \min_{\D, \mat{R}}{ \sum_i \Twosq{\vv{x}_i - \D \vv{r}_i } } + \lambda \sum_i \One{\vv{r}_i} + \Fro{\D}
\end{align}
We assume a Kronecker-decomposable dictionary $\D = \B \otimes \A$ with $\A \in \RR^{m \times r_1}, \B \in \RR^{n \times r_2}$. To model the presence of outliers, we introduce a set of vectors $\vv{e}_i \in \RR^{mn}$ and, with $d = r_1 r_2 + mn$, define the block vectors and matrices:

\begin{equation}
\vv{y}_i = \begin{bmatrix}\vv{r}_i\\ \vv{e}_i\end{bmatrix}  \in \RR^{d} \quad \C = \begin{bmatrix} \B \otimes \A & \II \end{bmatrix} \in \RR^{mn \times d}
\end{equation}

We obtain a two-level structured dictionary $\mat{C}$ and the associated sparse encodings $\vv{y}_i$. Breaking-down the variables to reduce dimensionality and discarding the constant $\Fro{\II}$:

\begin{equation}
\begin{split}\label{eq:two_level_structured_sparse_dictionary_learning_explicit}
\min_{\A, \B, \mat{R}, \mat{E}} \sum_i \Twosq{\vv{x}_i - (\B \otimes \A) \vv{r}_i - \vv{e}_i } \\ + \lambda \sum_i \One{\vv{r}_i} + \lambda \sum_i \One{\vv{e}_i} + \Fro{\B \otimes \A}
\end{split}
\end{equation}
Suppose now that the observations $\vv{x}_i$ were obtained by vectorizing two-dimensional data such as images, \ie, $\vv{x}_i = \vvec({\XXi}), \XXi \in \RR^{m \times n}$. We find preferable to keep the observations in matrix form as this preserves the spatial structure of the images, and - as we will see - allows us solve matrix equations efficiently instead of high-dimensional linear systems. Without loss of generality, we choose $r_1 = r_2 = r$ and $\vv{r}_i = \vvec(\Ri), \Ri \in \RR^{r \times r}$, and recast the problem as:
\begin{equation}\label{eq:matrix_2d_rpca}
\begin{split}
\min_{\A, \B, \ten{R}, \ten{E}} \sum_i \Frosq{\XXi - \A \Ri \Bt - \Ei }  \\ 
    + \lambda \sum_i \One{\Ri} + \lambda \sum_i \One{\Ei} + \Fro{\B \otimes \A}
\end{split}
\end{equation}
Equivalently, enforcing the equality constraints and writing the problem explicitly as a structured tensor factorization:
\begin{align}\label{eq:constrained_pb}
    \begin{matrix*}[l]
    \min_{\A, \B, \ten{R}, \ten{E}} & \lambda \One{\ten{R}} + \lambda \One{\tE} + \Fro{\B \otimes \A}\\
    \st & \tX = \ten{R} \times_1 \A \times_2 \B + \tE
    \end{matrix*}
\end{align}
Where the matrices $\XXi, \Ri$, and $\Ei$ are concatenated as the frontal slices of 3-way tensors. Figure \ref{fig:decomposition} illustrates the decomposition.

\begin{figure}
\centering
\includegraphics[width=\columnwidth]{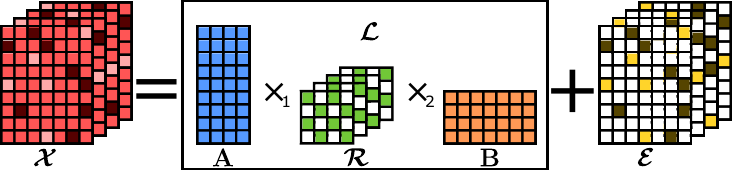}
\caption{Illustration of the decomposition.}
\label{fig:decomposition}
\end{figure}

We impose $r \leq \min(m, n)$ as a natural upper bound on the rank of the frontal slices of $\tL = \ten{R} \times_1 \A \times_2 \B$, and on its mode-$1$ and mode-$2$ ranks.


\subsection{An efficient algorithm}
Problem (\ref{eq:constrained_pb}) is not jointly convex, but is convex in each component individually. We resort to an alternating-direction method and propose a non-convex ADMM procedure that operates on the frontal slices.


Minimizing $\Fro{\B \otimes \A}$ presents a challenge: the product is high-dimensional, the two bases are coupled, and the loss is non-smooth. Using the identity $||\A \otimes \B||_p = ||\A||_p ||\B||_p$ where $||.||_p$ denotes the Schatten-$p$ norm (this follows from the compatibility of the Kronecker product with the singular value decomposition), and remarking that $\Fro{\B} \Fro{\A} \leq \frac{\Frosq{\A} + \Frosq{\B}}{2}$, we minimize a simpler upper bound ($\Nuc{\A\Bt}$ \cite{Recht2007}). The resulting sub-problems are smaller, and therefore more scalable. In order to obtain exact proximal steps for the encodings $\Ri$, we introduce a split variable $\Ki$ such that $\forall i, \; \Ki = \Ri$. Thus, we solve:
\begin{align}\label{eq:constrained_pb_rpca_upper_bound}
    \begin{matrix*}[l]
    \displaystyle \min_{\A, \B, \ten{R}, \tK \ten{E}} & \lambda \One{\ten{R}} + \lambda \One{\tE} + \frac{1}{2}(\Frosq{\A} + \Frosq{\B})\\
    \st & \tX = \ten{K} \times_1 \A \times_2 \B + \tE \\
    \st & \tR = \tK
    \end{matrix*}
\end{align}

Introducing the tensors of Lagrange multipliers $\LL$ and $\tY$, such that the $i^{th}$ frontal slice corresponds to the $i^{th}$ constraint, and the dual step sizes $\mu$ and $\mu_{\tK}$, we formulate the Augmented Lagrangian of problem (\ref{eq:constrained_pb_rpca_upper_bound}):
\begin{equation}\label{eq:augmented_lagrangian_pb_split}
\begin{split}
    \min \lambda \sum_i \One{\Ri} + \lambda \sum_i \One{\Ei} + \frac{1}{2}(\Frosq{\A} + \Frosq{\B}) + \\
    \sum_i \inner{\LLi}{\XXi - \A \Ki \Bt - \Ei} + 
    \sum_i \inner{\Yi}{\Ri - \Ki} + \\ \frac{\mu}{2} \sum_i \Frosq{\XXi - \A \Ki \Bt - \Ei } + \frac{\mu_{\tK}}{2} \sum_i \Frosq{\Ri - \Ki}
\end{split}
\end{equation}
We can now derive the ADMM updates. Each $\Ei$ is given by shrinkage after rescaling:
\begin{equation}
\Ei = \shrink_{\lambda/\mu} (\XXi - \A \Ki \Bt + \frac{1}{\mu} \LLi)
\end{equation}
A similar rule is immediate to derive for $\Ri$, and solving for $\A$ and $\B$ is straightforward with some matrix algebra. We therefore focus on the computation of the split variable $\Ki$. Differentiating, we find $\Ki$ satisfies:
\begin{align}\label{eq_updt_K_opti}
    & \mu_{\tK} \Ki + \mu \At\A \Ki \Bt\B \\
    & = \At ( \LLi + \mu ( \XXi - \Ei ) ) \B + \mu_{\tK} \Ri + \Yi \nonumber
\end{align}
The key for an efficient algorithm is here to recognize equation (\ref{eq_updt_K_opti}) is a \textit{Stein} equation, and can be solved in cubical time and quadratic space in $r$ by solvers for discrete-time Sylvester equations - such as the Hessenberg-Schur method \cite{Golub1979} - instead of the naive $O(r^6)$ time, $O(r^4)$ space solution of vectorizing the equation in an $r^2$ linear system.

In practice, we solve a slightly different problem with $\alpha \sum_i \One{\Ri} + \lambda \sum_i \One{\Ei}$. This introduces an additional degree of freedom and corresponds to rescaling the coefficients of $\Ri$. We found the modified problem to be numerically more stable and to allow for tuning of the relative importance of $\One{\tR}$ and $\One{\tE}$. We obtain Algorithm \ref{alg:kdrsdl}.
\begin{algorithm*}[ht]
\footnotesize
\begin{algorithmic}[1]
\Procedure{KDRSDL}{$\tX,r, \lambda, \alpha$}

\State $\A^0, \B^0, \tE^0, \ten{R}^0, \tK^0 \gets$ \Call{Initialize}{$\tX$}

\While{not converged}

\State $\tE^{t+1} \gets \shrink_{\lambda / \mu^t} ( \tX - \tK^{t} \times_1 \mat{A}^{t} \times_2 \mat{B}^{t} + \frac{1}{\mu^t} \LL^t)$

\State $\tilde{\tX}^{t+1} \gets \tX - \tE^{t+1}$

\State $\mat{A}^{t+1}\gets (\sum_i (\mu^t \tilde{\X}_i^{t+1} + \LLi^{t}) \mat{B}^{t} \trp{(\Ki^t)}) \; / \; (\II + \mu^t \sum_i \Ki^t \trp{(\mat{B}^{t})} \mat{B}^{t} \trp{(\Ki^t)})$

\State $\mat{B}^{t+1}\gets (\sum_i \trp{(\mu^t \tilde{\X}_i^{t+1} + \LLi^t)} \mat{A}^{t+1} \Ki^t) \; / \; (\II + \mu^t \sum_i \trp{(\Ki^t)} \trp{(\mat{A}^{t+1})} \mat{A}^{t+1} \Ki^t)$

\ForAll{i}
    \State
        \begin{varwidth}[t]{\linewidth}
            $\mat{K}_i^{t+1} \gets$ \textsc{Stein}($-\frac{\mu^t}{\mu_{\tK}^t} \trp{(\mat{A}^{t+1})}\mat{A}^{t+1}$,
                \; $\trp{(\mat{B}^{t+1})} \mat{B}^{t+1}$, $\frac{1}{\mu_{\tK}^t} \left[ \trp{(\mat{A}^{t+1})} ( \mat{\Lambda}_i^{t} + \mu^t \tilde{\mat{X}}_i^{t+1} ) \mat{B}^{t+1} + \mat{Y}_i^t \right] + \Ri^t $)
        \end{varwidth}
    \State $\mat{R}_i^{t+1} \gets \shrink_{\alpha / \mu_{\tK}^t} (\mat{K}_i^{t+1} - \frac{1}{\mu_{\tK}^t} \mat{Y}_i^t)$
\EndFor

\State $\LL^{t+1} \gets \LL^{t} + \mu^t (\tilde{\tX}^{t+1} - \tK^{t+1} \times_1 \mat{A}^{t+1} \times_2 \mat{B}^{t+1})$
\State $\tY^{t+1} \gets \tY^{t} + \mu_{\tK}^t (\tR^{t+1} - \tK^{t+1})$
\State $\mu^{t+1} \gets \min(\mu^*,  \rho \mu^t)$
\State $\mu_{\tK}^{t+1} \gets \min(\mu_{\tK}^*, \rho \mu_{\tK}^t)$

\EndWhile

\State \textbf{return} $\A, \B, \ten{R}, \tE$ 
\EndProcedure
\end{algorithmic}
\caption{Kronecker-Decomposable Robust Sparse Dictionary Learning.}
\label{alg:kdrsdl}
\end{algorithm*}

%
%
%
%

\section{Discussion}

We show experimentally that our algorithm successfully recovers the components of the decomposition (\ref{eq:constrained_pb}), we then prove the formulation encourages two different notions of low-rankness on tensors, and finally discuss the issues of non-convexity and convergence.

\subsection{Validation on synthetic data}

We generated synthetic data following the model's assumptions by first sampling two random bases $\A$ and $\B$ of known ranks $r_{\A}$ and $r_{\B}$, $N$ Gaussian slices for the core $\tR$, and forming the ground truth $\tL = \tR \times_1 \A \times_2 \B$. We modeled additive random sparse Laplacian noise with a tensor $\tE$ whose entries are $0$ with probability $p$, and $1$ or $-1$ with equal probability otherwise. We generated data for $p = 70\%$ and $p = 40\%$, leading to a noise density of, respectively, $30\%$ and $60\%$. We measured the reconstruction error on $\tL$, $\tE$, and the density of $\tE$ for varying values of $\lambda$, and $\alpha = \num{1e-2}$. Our model achieved near-exact recovery of both $\tL$ and $\tE$, and exact recovery of the density of $\tE$, for suitable values of $\lambda$. Experimental evidence is presented in Figure \ref{fig:synthdata_60} for the $60\%$ noise case.

\begin{figure}
\centering
\includegraphics[width=\columnwidth]{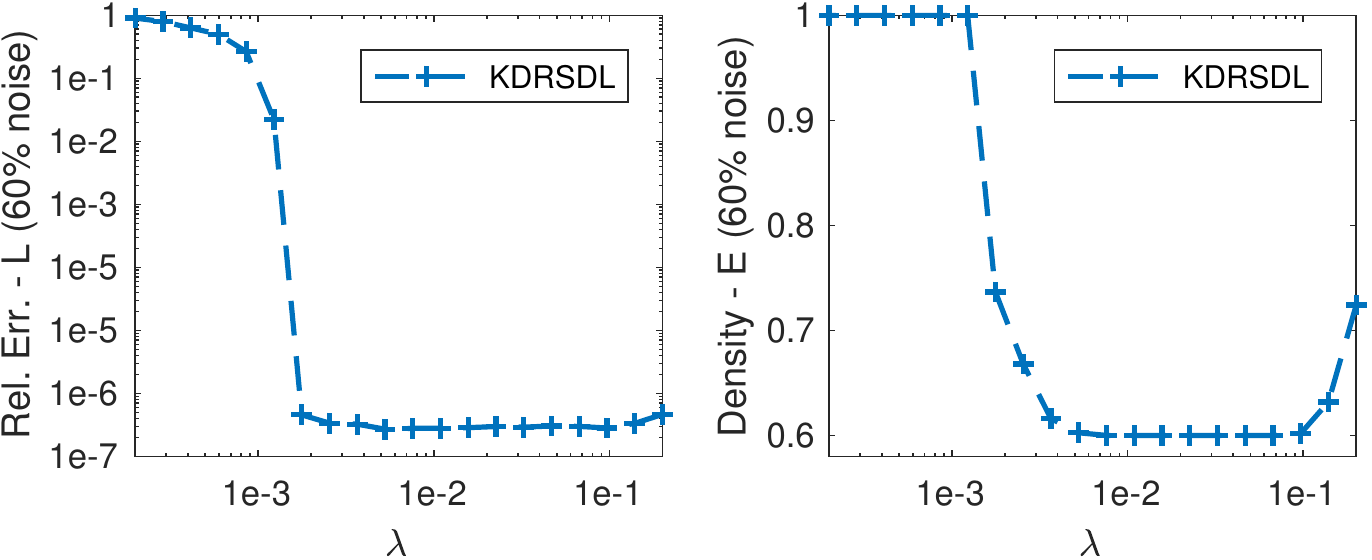}
\caption{Recovery performance with 60\% corruption. Relative $\ell_2$ error and density.}
\label{fig:synthdata_60}
\end{figure}

The algorithm appears robust to small changes in $\lambda$, which suggests not only one value can lead to optimal results, and that a simple criterion that provides consistently good reconstruction may be derived, as in Robust PCA \cite{Candes2011a}. In the $30\%$ noise case, we did not observe an increase in the density of $\tE$ as $\lambda$ increases, and the $\ell_2$ error on both $\tE$ and $\tL$ was of the order of \num{1e-7} (\cf supplementary material).

\subsection{Low-rank solutions}
Seeing the model from the perspective of Robust PCA (\ref{eq:cvx_rpca}), which seeks a low-rank representation $\A$ of the dataset $\X$, we minimize the rank of the low-rank tensor $\tL$. More precisely, we show in theorem \ref{thm:rank} that we simultaneously penalize the Tucker rank and the multi-rank of $\tL$.

\begin{thm}\label{thm:rank}
Algorithm \ref{alg:kdrsdl} encourages low mode-$1$ and mode-$2$ rank, and thus, low-rankness in each frontal slice of $\tL$, for suitable choices of the parameters $\lambda$ and $\alpha$.
\end{thm}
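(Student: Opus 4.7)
The plan is to trace the low-rank pressure exerted by the Frobenius surrogate on $\A$ and $\B$ through the multilinear structure of $\tL = \tR \times_1 \A \times_2 \B$, landing on bounds for the mode-$1$, mode-$2$, and frontal-slice ranks. First I would observe that, among the terms in (\ref{eq:constrained_pb_rpca_upper_bound}), the only block that shapes $\A$ and $\B$ beyond the reconstruction residual is $\tfrac{1}{2}(\Frosq{\A}+\Frosq{\B})$, and invoke the variational form of the nuclear norm from \cite{Recht2007},
\begin{equation*}
\Nuc{\mat{M}} \;=\; \min_{\mat{U}\trp{\mat{V}} = \mat{M}} \tfrac{1}{2}(\Frosq{\mat{U}}+\Frosq{\mat{V}}),
\end{equation*}
to obtain the pointwise inequality $\tfrac{1}{2}(\Frosq{\A}+\Frosq{\B}) \geq \Nuc{\A\trp{\B}}$, with equality on the balanced SVD factorization $\A = \mat{U}\sqrt{\mat{\Sigma}}$, $\B = \mat{V}\sqrt{\mat{\Sigma}}$ of $\A\trp{\B} = \mat{U}\mat{\Sigma}\trp{\mat{V}}$. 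The surrogate therefore acts as a tight upper bound for nuclear-norm regularization of $\A\trp{\B}$, which is the convex envelope of the rank on the spectral unit ball, so low rank of $\A\trp{\B}$ is being encouraged.

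Next I would transfer this low-rankness from the product to the individual factors. Because $\sqrt{\mat{\Sigma}}$ has the same rank as $\mat{\Sigma}$, the Recht-balanced decomposition satisfies $\rk{\A} = \rk{\B} = \rk{\A\trp{\B}}$; the closed-form $\A$- and $\B$-updates of Algorithm~\ref{alg:kdrsdl} jointly minimize the Frobenius penalty and the fidelity term, so at stationarity the iterates are driven toward this balanced form and $\rk{\A}, \rk{\B}$ shrink together with $\rk{\A\trp{\B}}$. I would then unfold $\tL$ as $\mat{L}_{(1)} = \A\,\mat{R}_{(1)}\,\trp{(\II_N \otimes \B)}$, and symmetrically for mode-$2$, to get $\rk{\mat{L}_{(1)}} \leq \rk{\A}$ and $\rk{\mat{L}_{(2)}} \leq \rk{\B}$, while each frontal slice $\mat{L}_i = \A\Ri\trp{\B}$ obeys $\rk{\mat{L}_i} \leq \min(\rk{\A},\rk{\B},\rk{\Ri})$. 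These inequalities jointly control the Tucker rank and the multi-rank of $\tL$.

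The hard part will be justifying the passage from Recht's identity, which is attained only at a global balanced minimizer, to the non-convex iterates actually produced by ADMM. I would need to verify that the first-order optimality conditions for the $\A$- and $\B$-updates in Algorithm~\ref{alg:kdrsdl} reproduce the SVD-balancing condition, so that the surrogate operationally realizes (rather than merely upper bounds) $\Nuc{\A\trp{\B}}$ at stationarity. The \emph{suitable} $\lambda$ and $\alpha$ clause enters precisely here: they must be small enough that the Frobenius surrogate is not dominated and the low-rank pressure on $\A\trp{\B}$ remains active, yet large enough for the sparsity terms on $\tR$ and $\tE$ to be effective so that the low-rank component $\tL$ separates cleanly from the outliers.
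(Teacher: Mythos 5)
Your route is genuinely different from the paper's. The paper's proof does not go through Recht's variational characterization at all: it reasons on the original penalty $\Fro{\A \otimes \B}$ of problem (\ref{eq:constrained_pb}), uses the finite-dimensional norm-equivalence bound $\Nuc{\A \otimes \B} \leq k \Fro{\A \otimes \B}$ (exhibiting $k = \sqrt{\min(m,n)}$ from the $\lone$--$\ltwo$ inequality on singular values, which also guides the choice of $\lambda$ and $\alpha$), and then exploits the identity $\rk{\A \otimes \B} = \rk{\A}\rk{\B}$: penalizing a product of positive integers forces at least one factor to drop and bounds both, which is what delivers low mode-$1$ \emph{and} mode-$2$ rank. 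Your argument instead reasons on the split objective (\ref{eq:constrained_pb_rpca_upper_bound}) and passes through $\Nuc{\A\trp{\B}}$; your downstream inequalities (the unfolding $\mat{L}_{(1)} = \A \mat{R}_{(1)} \trp{(\II \otimes \B)}$ giving $\rk{\mat{L}_{(1)}} \leq \rk{\A}$, and $\rk{\A\Ri\Bt} \leq \min(\rk{\A},\rk{\B},\rk{\Ri})$) coincide with the paper's conclusion and are correct.

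The weak link is exactly the one you flag, and it is more serious than you suggest. Since $\rk{\A\trp{\B}} \leq \min(\rk{\A},\rk{\B})$, driving $\A\trp{\B}$ to low rank constrains neither factor individually (take $\B$ of rank one and $\A$ of full rank), so your entire conclusion rests on the claim that the iterates reach the Recht-balanced factorization. That claim is doubtful as stated: the equality $\tfrac{1}{2}(\Frosq{\A}+\Frosq{\B}) = \Nuc{\A\trp{\B}}$ is attained when minimizing over factorizations of a \emph{fixed} product $\A\trp{\B}$, but in (\ref{eq:constrained_pb_rpca_upper_bound}) the fidelity term depends on $\A\Ki\Bt$, not on $\A\trp{\B}$, so the first-order conditions of the $\A$- and $\B$-updates in Algorithm \ref{alg:kdrsdl} do not reduce to the SVD-balancing condition and the stationary points need not satisfy $\rk{\A}=\rk{\B}=\rk{\A\trp{\B}}$. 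Both proofs are ultimately heuristic --- the theorem only claims the algorithm \emph{encourages} low rank --- but to reach the stated conclusion you should replace the balancing step with the paper's Kronecker-rank observation, which controls $\rk{\A}\rk{\B}$ directly and sidesteps the transfer problem entirely.
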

\begin{proof}
From the equivalence of norms in finite-dimensions, $\exists k \in \RR_+^*, \Nuc{\A \otimes \B} \leq k \Fro{\A \otimes \B}$. We minimise $\lambda \One{\tE} + \alpha \One{\tR} + \Fro{\A \otimes \B}$. By choosing $\alpha = \frac{\alpha'}{k}, \lambda = \frac{\lambda'}{k}$, we penalize $\rk{\A \otimes \B} = \rk{\A}\rk{\B}$. Given that the rank is a non-negative integer, $\rk{\A}$ or $\rk{\B}$ decreases necessarily. Therefore, we minimize the mode-$1$ and mode-$2$ ranks of $\tL = \tR \times_1 \A \times_2 \B$. Additionally, $\forall i, \rk{\A \Ri \Bt} \leq \min(\rk{\A}, \rk{\B}, \rk{\Ri})$.

Exhibiting a valid $k$ may help in the choice of parameters. We show $\forall \A \in \RR^{m \times n}, \; \Nuc{\A} \leq \sqrt{\min(m, n)}\Fro{\A}$: We know $\forall \vv{x} \in \RR^n, \One{\vv{x}} \leq \sqrt{n} \Two{\vv{x}}$. By definition, the Schatten-$p$ norm of $\A$ is the $\ell_p$ norm of its singular values. Recalling the nuclear norm and the Frobenius norm are the Schatten-$1$ and Schatten-$2$ norms, $\A$ has $\min(m, n)$ singular values, hence the result.
\end{proof}
In practice, we find that on synthetic data designed to test the model, we effectively recover the ranks of $\A$ and $\B$ regardless of the choice of $r$, as seen in Figure \ref{fig:sample_ranks}.
\begin{figure}
\centering
\includegraphics[width=\columnwidth]{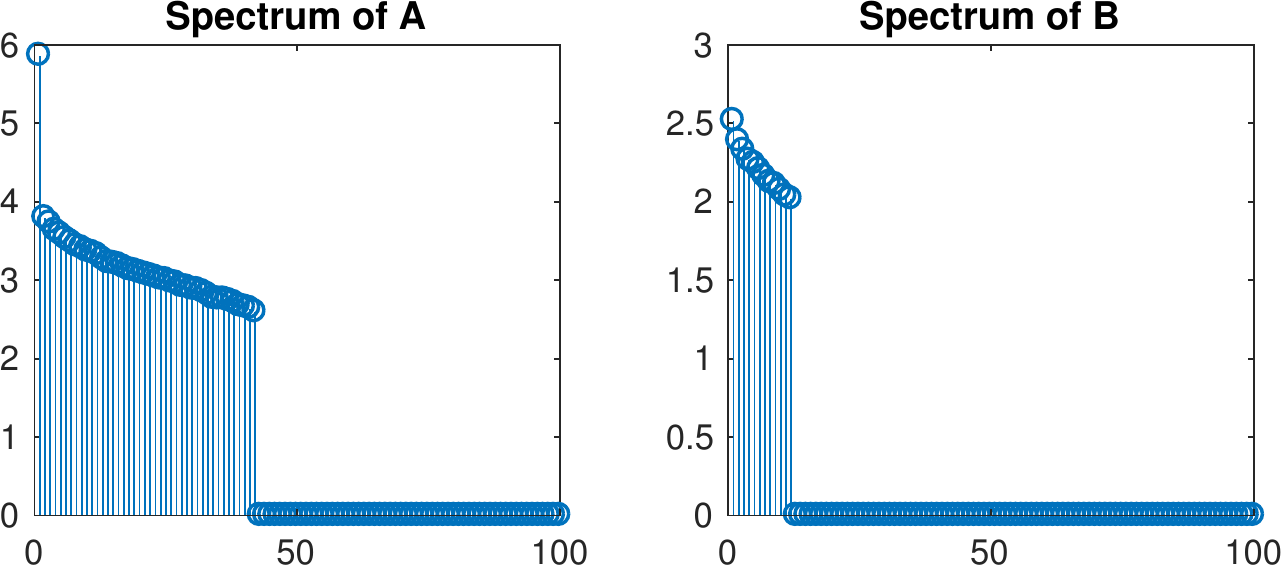}
\caption{Sample spectrums of $\A$ and $\B$, the ground truth is 42 for $\A$ and 12 for $\B$, and is attained. $r = 100$.}
\label{fig:sample_ranks}
\end{figure}

\subsection{Convergence and complexity}

\textit{Convergence and initialization:} The problem is non-convex, therefore global convergence is a priori not guaranteed. Recent work \cite{7178689, 2015arXiv151106324W} studies the convergence of ADMM for non-convex and possibly non-smooth objective functions with linear constraints. Here, the constraints are not linear. In \cite{icml2014c2_haeffele14, DBLP:journals/corr/HaeffeleV15} the authors derive conditions for global optimality in specific non-convex matrix and tensor factorizations that may be extended to other formulations, including our model. However, the theoretical study of the global convergence of our algorithm is out of the scope of this paper and is left for future work. Instead, we provide experimental results and discuss the strategy implemented.

We propose a simple initialization scheme in the wake of \cite{Lin2013}. We initialize the bases $\A$ and $\B$ and the core $\tR$ by performing SVD on each observation $\XXi = \mat{U}_i \mat{S}_i \trp{\mat{V}_i}$. We set $\Ri = \mat{S}_i$, $\A = \frac{1}{N} \sum_i \mat{U}_i$ and $\B = \frac{1}{N} \sum_i \mat{V}_i$. To initialize the dual-variables for the constraint $\XXi - \A \Ri \Bt - \Ei = \mat{0}$, we take $\mu^0 = \frac{\eta N}{\sum_i \Fro{\XXi}}$ where $\eta$ is a scaling coefficient, chosen in practice to be $\eta = 1.25$ as in \cite{Lin2013}. Similarly, we chose $\mu_{\ten{K}}^0 = \frac{\eta N}{\sum_i \Fro{\Ri}}$. These correspond to averaging the initial values for each individual slice and its corresponding constraint. Our convergence criterion corresponds to primal-feasibility of problem (\ref{eq:constrained_pb_rpca_upper_bound}), and is given by $\max(\mathrm{err}_{rec}, \mathrm{err}_{split}) \leq \epsilon$ where $\mathrm{err}_{rec} = \max_i \frac{\Frosq{\XXi - \A \Ri \Bt - \Ei}}{\Frosq{\XXi}}$ and $\mathrm{err}_{split} = \max_i \frac{\Frosq{\Ri - \Ki}}{\Frosq{\Ri}}$. Empirically, we obtained systematic convergence to a good solution, and a linear convergence rate, as shown in Figure \ref{fig:sample_convergence}. The parameter $\rho > 1$ can be tuned and affects the speed of convergence. High values may lead to the algorithm diverging, while values closer to 1 will lead to an increased number of iterations. We used $\rho = 1.2$ and set upper bounds $\mu^* = \mu^0 \times 10^7 $ and $\mu_{\tK}^* = \mu_{\tK}^0 \times 10^7 $ as usual with ADMM.

\begin{figure}
\centering
\includegraphics[width=\columnwidth]{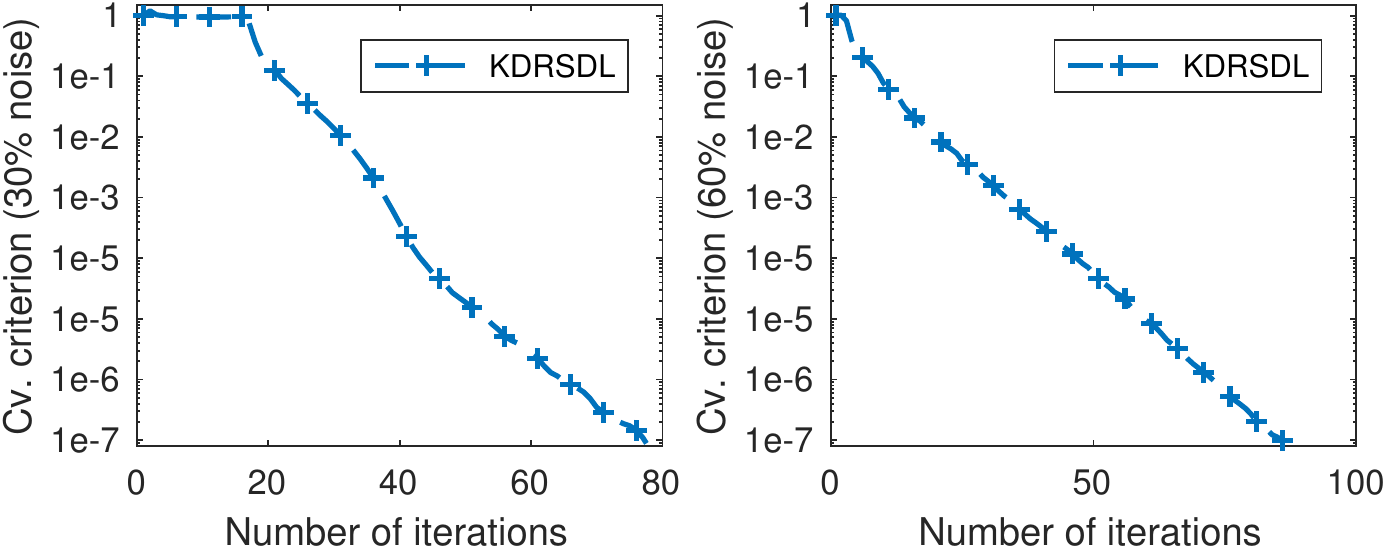}
\caption{Convergence on synthetic data with 30\% and 60\% corruption.}
\label{fig:sample_convergence}
\end{figure}

\textit{Computational complexity:} The time and space complexity per iteration of Algorithm \ref{alg:kdrsdl} are $O(N(mnr + (m + n)r + mn + \min(m, n)r^2 + r^3 + r^2))$ and $O( N(mn + r^2) + (m + n)r + r^2  )$. Since $r \leq \min(m, n)$, the terms in $r$ are asymptotically negligible, but in practice it is useful to know how the computational requirements scale with the size of the dictionary. Similarly, the initialization procedure has cost $O(N(m n \min (m, n) + (\min(m ,n))^3 + mn) + mn)$ in time and needs quadratic space per slice, assuming a standard algorithm is used for the SVD\cite{Chan1982}. Several key steps in the algorithm, such as summations of independent terms, are trivially distributed in a \textit{MapReduce} \cite{Dean2004} way. Proximal operators are separable in nature and are therefore parallelizable. Consequently, highly parallel and distributed implementations are possible, and computational complexity can be further reduced by adaptively adopting sparse linear algebra structures and algorithms.

\section{Experimental evaluation\protect}
We compared the performance of our model against a range of state-of-the-art tensor decomposition algorithms on four low-rank modeling computer vision benchmarks: two for image denoising, and two for background subtraction. As a baseline, we report the performance of matrix Robust PCA implemented via inexact ALM (\textit{RPCA}) \cite{Candes2011a, Lin2013}, and of Non-Negative Robust Dictionary Learning (\textit{RNNDL}) \cite{Pan:2014:RND:2892753.2892834}. We chose the following methods to include recent representatives of various existing approaches to low-rank modeling on tensors: The singleton version of Higher-Order Robust PCA (\textit{HORPCA-S}) \cite{Goldfarb2013} optimizes the Tucker rank of the tensor through the sum of the nuclear norms of its unfoldings. In \cite{Yang2015a}, the authors consider a similar model but with robust M-estimators as loss functions, either a Cauchy loss or a Welsh loss, and support both hard and soft thresholding; we tested the soft-thresholding models (\textit{Cauchy ST} and \textit{Welsh ST}). Non-convex Tensor Robust PCA (\textit{NC TRPCA}) \cite{Anandkumar2015} adapts to tensors the matrix non-convex RPCA \cite{Netrapalli2014}. Finally, the two Tensor RPCA algorithms \cite{Lu_2016_CVPR, Zhang2014} (\textit{TRPCA '14} and \textit{TRPCA '16}) work with slightly different definitions of the tensor nuclear norm as a convex surrogate of the tensor multi-rank. 


For each model, we identified a maximum of two parameters to tune via grid-search in order to keep parameter tuning tractable. When criteria or heuristics for choosing the parameters were provided by the authors, we chose the search space around the value obtained from them. In all cases, the tuning process explored a wide range of parameters to maximize performance. The range of values investigated are provided as supplementary material.

When the performance of one method was significantly worse than that of the other, the result is not reported so as not to clutter the text, and is made available in the supplementary material. This is the case of Separable Dictionary Learning \cite{Hawe2013} whose drastically different nature renders unsuitable for robust low-rank modeling, but was compared for completeness. For the same reason, we did not compare our method against K-SVD \cite{Aharon2006}, or \cite{Hsieh2014}.

\subsection{Background subtraction}

Background subtraction is a common task in computer vision and can be tackled by robust low-rank modeling: the static or mostly static background of a video sequence can effectively be represented as a low-rank tensor while the foreground forms a sparse component of outliers.

\mypar{Experimental procedure} We compared the algorithms on two benchmarks. The first is an excerpt of the \textit{Highway} dataset \cite{Goyette2012}, and consists in a video sequence of cars travelling on a highway; the background is completely static. We kept 400 gray-scale images re-sized to $48 \times 64$ pixels. The second is the \textit{Airport Hall} dataset (\cite{Li2004}) and has been chosen as a more challenging benchmark since the background is not fully static and the scene is richer. We used the same excerpt of 300 frames (frames 3301 to 3600) as in \cite{Zhao2016}, and kept the frames in their original size of $144 \times 176$ pixels.

We treat background subtraction as a binary classification problem. Since ground truth frames are available for our excerpts, we report the AUC \cite{Fawcett2006} on both videos.
The value of $\alpha$ was set to \num{1e-2} for both experiments.

\mypar{Results} We provide the original, ground truth, and recovered frames in Figure \ref{fig:visual_hall} for the \textit{Hall} experiment (\textit{Highway} in supplementary material).

Table \ref{tab:perf_bg} presents the AUC scores of the algorithms, ranked in order of their mean performance on the two benchmarks. The two matrix methods rank high on both benchmarks and only half of the tensor algorithms match or outperform this baseline. Our proposed model matches the best performance on the \textit{Highway} dataset and provides significantly higher performance than the other on the more challenging \textit{Hall} benchmark. Visual inspection of the results show KDRSDL is the only method that doesn't fully capture the immobile people in the background, and therefore achieves the best trade-off between foreground detection and background-foreground contamination.

\begin{table}[h]
\small
\centering
\resizebox{\columnwidth}{!}{
\begin{tabularx}{\columnwidth}{|>{\centering\arraybackslash}X|>{\hsize=.5\hsize\centering\arraybackslash}X|>{\hsize=.5\hsize\centering\arraybackslash}X|}\hline
\textbf{Algorithm} & \textbf{Highway} & \textbf{Hall} \\ \hline
\textbf{KDRSDL (proposed)} & 0.94 & 0.88 \\ \hline
TRPCA '16 & 0.94 & 0.86 \\ \hline
NC TRPCA   & 0.93 & 0.86 \\ \hline
\textit{RPCA (baseline)} & 0.94 & 0.85 \\ \hline
\textit{RNNDL (baseline)} & 0.94 & 0.85\\ \hline
HORPCA-S  & 0.93 & 0.86 \\ \hline
Cauchy ST & 0.83 & 0.76 \\ \hline
Welsh ST & 0.82 & 0.71 \\ \hline
TRPCA '14 & 0.76 & 0.61 \\ \hline
\end{tabularx}
}
\caption[AUC scores]{AUC on \textit{Highway} and \textit{Hall} ordered by mean AUC.}
\label{tab:perf_bg}
\normalsize
\end{table}

\begin{figure}
\captionsetup[sub]{font=scriptsize}
\begin{subfigure}[b]{.19\linewidth}
\includegraphics[width = \linewidth]{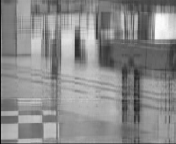} 
\caption{ Cauchy ST}
\end{subfigure}\hfill
\begin{subfigure}[b]{.19\linewidth}
\includegraphics[width = \linewidth]{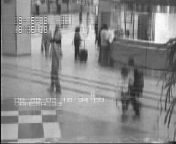} 
\caption{Welsh ST}
\end{subfigure}\hfill
\begin{subfigure}[b]{.19\linewidth}
\includegraphics[width = \linewidth]{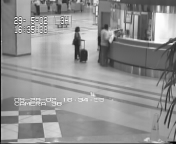}
\caption{TRPCA '16}
\end{subfigure}\hfill
\begin{subfigure}[b]{.19\linewidth}
\includegraphics[width = \linewidth]{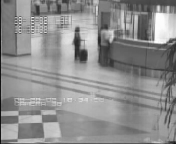} 
\caption{HORPCA-S}
\end{subfigure}\hfill
\begin{subfigure}[b]{.19\linewidth}
\includegraphics[width = \linewidth]{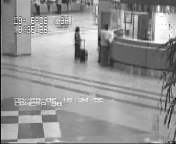} 
\caption{NCTRPCA}
\end{subfigure}\hfill
\begin{subfigure}[b]{.19\linewidth}
\includegraphics[width = \linewidth]{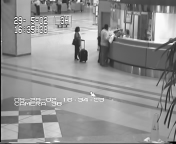} 
\caption{RNNDL}
\end{subfigure}\hfill
\begin{subfigure}[b]{.19\linewidth}
\includegraphics[width = \linewidth]{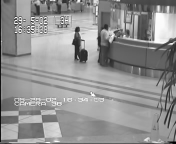} 
\caption{RPCA}
\end{subfigure}\hfill
\begin{subfigure}[b]{.19\linewidth}
\includegraphics[width = \linewidth]{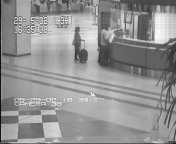}
\caption{KDRSDL}
\end{subfigure}\hfill
\begin{subfigure}[b]{.19\linewidth}
\includegraphics[width = \linewidth]{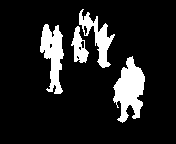}
\caption{Ground truth}
\end{subfigure}
\begin{subfigure}[b]{.19\linewidth}
\includegraphics[width = \linewidth]{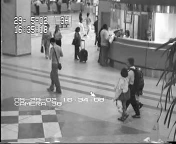} 
\caption{Original}
\end{subfigure}
\caption{Results on \textit{Airport Hall}. TRPCA '14 removed.}
\label{fig:visual_hall}
\end{figure}



\subsection{Image denoising}

Many natural and artificial images exhibit an inherent low-rank structure and are suitably denoised by low-rank modeling algorithms. In this section, we assess the performance of the cohort on two datasets chosen for their popularity, and for the typical use cases they represent.

We consider collections of grayscale images, and color images represented as 3-way tensors. Laplacian (salt \& pepper) noise was introduced separately in all frontal slices of the observation tensor at three different levels: 10\%, 30\%, and 60\% to simulate medium, high, and gross corruption. In these experiments we set the value of $\alpha$ to \num{1e-3} for noise levels up to 30\%, and to \num{1e-2} at the 60\% level.

We report two quantitative metrics designed to measure two key aspects of image recovery. The \textit{Peak Signal To Noise Ratio (PSNR)} will be used as an indicator of the element-wise reconstruction quality of the signals, while the \textit{Feature Similarity Index (FSIM, FSIMc for color images)} \cite{LinZhang2011} evaluates the recovery of structural information. Quantitative metrics are not perfect replacements for subjective visual assessment of image quality; therefore, we present sample reconstructed images for verification. Our measure of choice for determining which images to compare visually is the FSIM(c) for its higher correlation with human evaluation than the PSNR.

\mypar{Monochromatic face images} Our face denoising experiment uses the Extented Yale-B dataset \cite{Georghiades2001} of 10 different subject, each under 64 different lighting conditions. According to \cite{Ramamoorthi2001, Basri2003}, face images of one subject under various illuminations lie approximately on a 9-dimensional subspace, and are therefore suitable for low-rank modeling. We used the pre-cropped 64 images of the first subject and kept them at full resolution. The resulting collection of images constitutes a 3-way tensor of 64 images of size $192 \times 168$. Each mode corresponds respectively to the columns and rows of the image, and to the illumination component. All three are expected to be low-rank due to the spatial correlation within frontal slices and to the correlation between images of the same subject under different illuminations. We present the comparative quantitative performance of the methods tested in Figure \ref{fig:perf_yale}, and provide visualizations of the reconstructed first image at the 30\% noise level in Figure \ref{fig:visual_yale_30}. We report the metrics averaged on the 64 images.

\begin{figure}
    \centering
    \includegraphics[width=\linewidth]{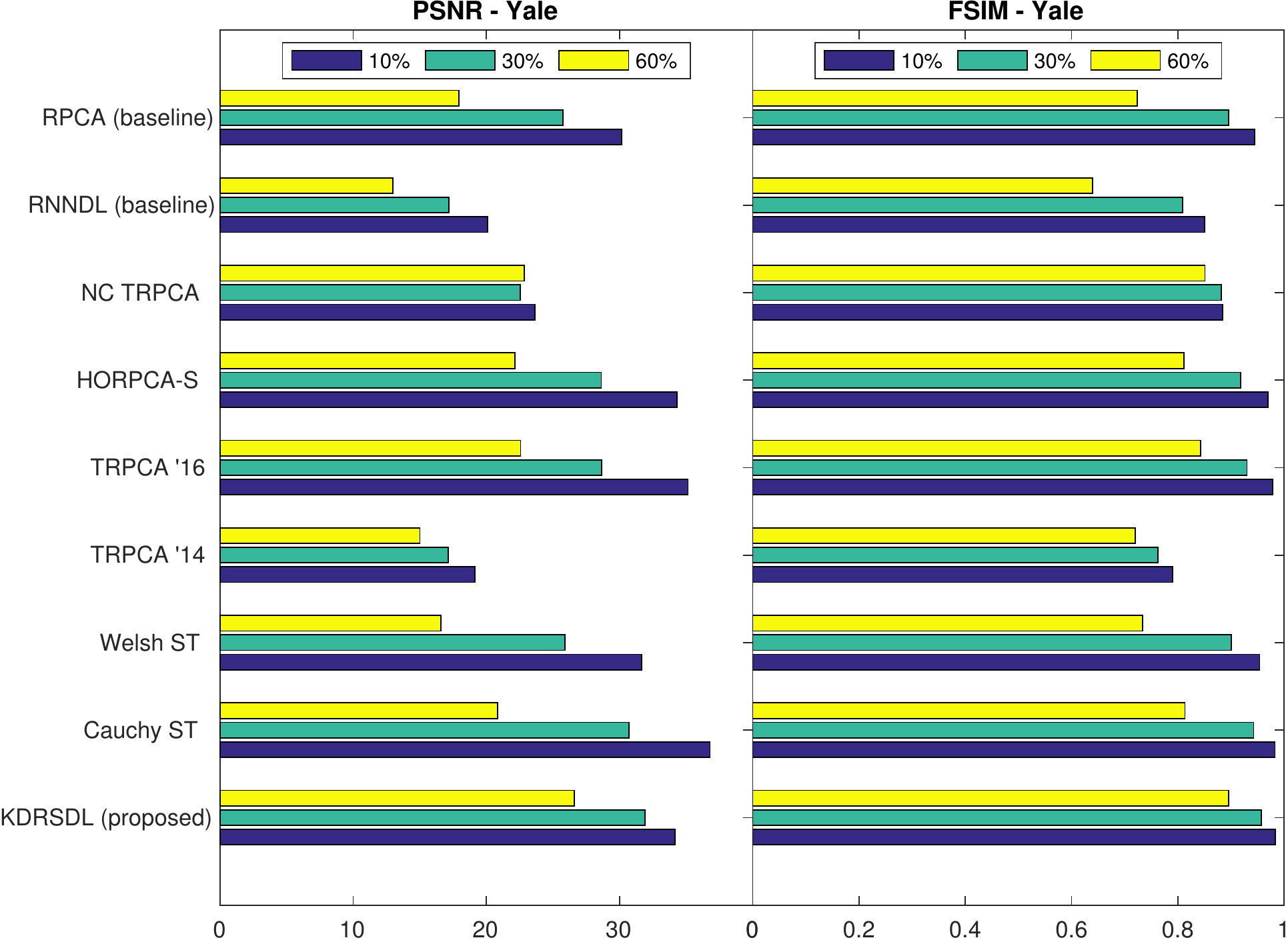}
    \caption{Mean PSNR and FSIM on the 64 images of the first subject of \textit{Yale }at noise levels 10\%, 30\%, and 60\%.}
    \label{fig:perf_yale}
\end{figure}

At the 10\% noise level, nearly every method provided good to excellent recovery of the original images. We therefore omit this noise level (\cf supplementary material). On the other hand, most methods, with the notable exception of KDRSDL, NC TRPCA, and TRPCA '16, failed to provide acceptable reconstruction in the gross corruption case. Thus, we present the denoised images at the 30\% level, and compare the performance of the three best performing methods in Table \ref{tab:top_3_yale_60} for the 60\% noise level.

Clear differences appeared at the 30\% noise level, as demonstrated both by the quantitative metrics, and by visual inspection of Figure \ref{fig:visual_yale_30}. Overall, performance was markedly lower than at the 10\% level, and most methods started to lose much of the details. Visual inspection of the results confirms a higher reconstruction quality for KDRSDL. We invite the reader to look at the texture of the skin, the white of the eye, and at the reflection of the light on the subject's skin and pupil. The latter, in particular, is very close in nature to the white pixel corruption of the salt \& pepper noise. Out of all methods, KDRSDL provided the best reconstruction quality: it is the only algorithm that removed all the noise and for which all the aforementioned details are distinguishable in the reconstruction.

\begin{figure}
\captionsetup[sub]{font=scriptsize}
\begin{subfigure}[b]{.19\linewidth}
\includegraphics[width = \linewidth]{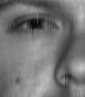} 
\caption{Cauchy ST}
\end{subfigure}\hfill
\begin{subfigure}[b]{.19\linewidth}
\includegraphics[width = \linewidth]{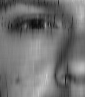} 
\caption{Welsh ST}
\end{subfigure}\hfill
\begin{subfigure}[b]{.19\linewidth}
\includegraphics[width = \linewidth]{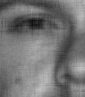}
\caption{TRPCA '16}
\end{subfigure}\hfill
\begin{subfigure}[b]{.19\linewidth}
\includegraphics[width = \linewidth]{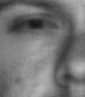} 
\caption{HORPCA-S}
\end{subfigure}\hfill
\begin{subfigure}[b]{.19\linewidth}
\includegraphics[width = \linewidth]{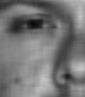} 
\caption{NCTRPCA}
\end{subfigure}\hfill
\begin{subfigure}[b]{.19\linewidth}
\includegraphics[width = \linewidth]{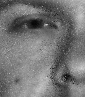} 
\caption{RNNDL}
\end{subfigure}\hfill
\begin{subfigure}[b]{.19\linewidth}
\includegraphics[width = \linewidth]{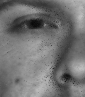} 
\caption{RPCA}
\end{subfigure}\hfill
\begin{subfigure}[b]{.19\linewidth}
\includegraphics[width = \linewidth]{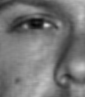}
\caption{KDRSDL}
\end{subfigure}
\begin{subfigure}[b]{.19\linewidth}
\includegraphics[width = \linewidth]{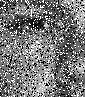}
\caption{Noisy}
\end{subfigure}
\begin{subfigure}[b]{.19\linewidth}
\includegraphics[width = \linewidth]{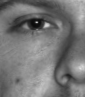} 
\caption{Original}
\end{subfigure}
\caption{Results on the Yale benchmark with 30\% noise. TRPCA '14 removed.}
\label{fig:visual_yale_30}
\end{figure}


\begin{table}[h]
    \centering
    \begin{tabular}{cccc}
        \includegraphics[width = .20 \linewidth]{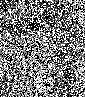} &
        \includegraphics[width = .20 \linewidth]{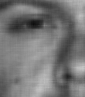} &
        \includegraphics[width = .20 \linewidth]{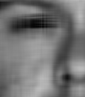} &
        \includegraphics[width = .20 \linewidth]{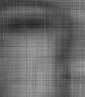}\\
         Noisy & KDRSDL & NC TRPCA & TRPCA '16 \Tstrut\Bstrut \\ \hline
         \textbf{PSNR} & 26.6057 & 22.8502 & 22.566 \Tstrut \\
         \textbf{FSIM} & 0.8956 & 0.8509 & 0.8427
    \end{tabular}
    \caption{Three best results on \textit{Yale} at 60\% noise.}
    \label{tab:top_3_yale_60}
\end{table}


At the 60\% noise level, our method scored markedly higher than its competitors on image quality metrics, as seen both in Figure \ref{fig:perf_yale} and in Table \ref{tab:top_3_yale_60}. Visualizing the reconstructions confirms the difference: the image recovered by KDRSDL at the 60\% noise level is comparable to the output of competing algorithms at the 30\% noise level.

\mypar{Color image denoising} Our benchmark is the \textit{Facade} image \cite{Chen_2016_CVPR}: the rich details and lighting makes it interesting to assess fine reconstruction. The geometric nature of the building's front wall, and the strong correlation between the RGB bands indicate the data can be modeled by a low-rank 3-way tensor where each frontal slice is a color channel.

\begin{figure}
    \centering
    \includegraphics[width=\linewidth]{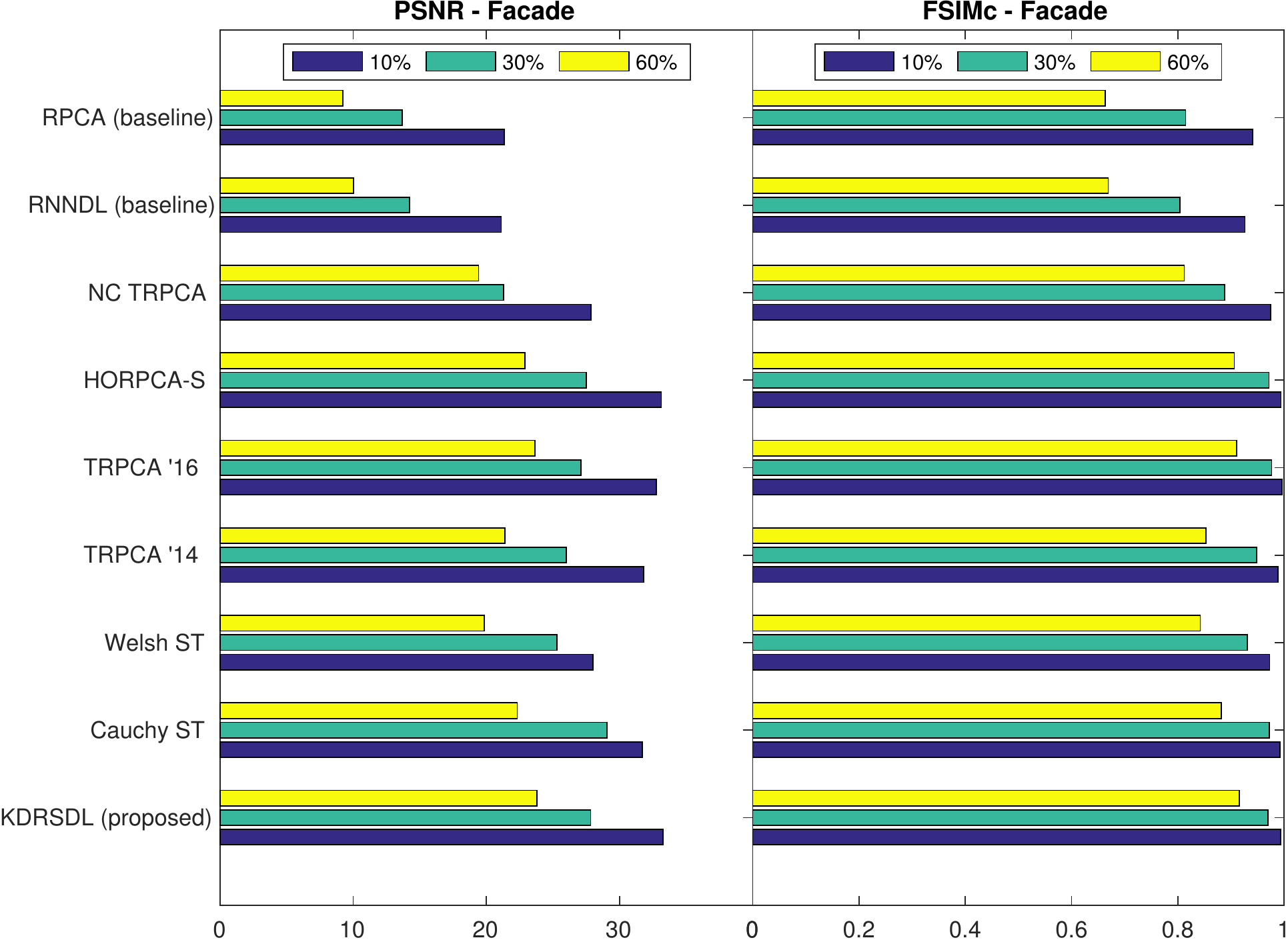}
    \caption{PSNR and FSIMc of all methods on the Facade benchmark at noise levels 10\%, 30\%, and 60\%.}
    \label{fig:perf_facade}
\end{figure}

At the 10\% noise level, KDRSDL attained the highest PSNR, and among the highest FSIMc values. Most methods provided excellent reconstruction, in agreement with the high values of the metrics shown in Figure \ref{fig:perf_facade}. As in the previous benchmark, the results are omitted because of the space constraints (\cf supplementary material). At the 30\% noise level, Cauchy ST exhibited the highest PSNR, while TRPCA '16 scored best on the FSIMc metric. KDRSDL had the second highest PSNR and among the highest FSIMc. Details of the results are provided in Figure \ref{fig:visual_facade_30}. Visually, clear differences are visible, and are best seen on the fine details of the picture, such as the black iron ornaments, or the light coming through the window. Our method best preserved the dynamics of the lighting, and the sharpness of the details, and in the end provided the reconstruction visually closest to original. Competing models tend to oversmooth the image, and to make the light dimmer; indicating substantial losses of high-frequency and dynamic information. KDRSDL appears to also provide the best color fidelity.
\begin{figure}
\captionsetup[sub]{font=scriptsize}
\begin{subfigure}[b]{.19\linewidth}
\includegraphics[width = \linewidth]{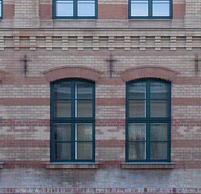} 
\caption{Cauchy ST}
\end{subfigure}\hfill
\begin{subfigure}[b]{.19\linewidth}
\includegraphics[width = \linewidth]{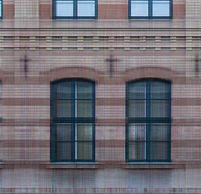} 
\caption{Welsh ST}
\end{subfigure}\hfill
\begin{subfigure}[b]{.19\linewidth}
\includegraphics[width = \linewidth]{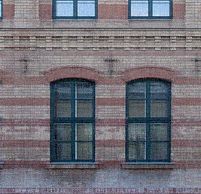} 
\caption{TRPCA '14}
\end{subfigure}\hfill
\begin{subfigure}[b]{.19\linewidth}
\includegraphics[width = \linewidth]{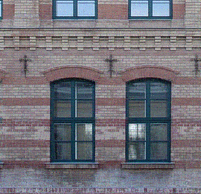}
\caption{TRPCA '16}
\end{subfigure}\hfill
\begin{subfigure}[b]{.19\linewidth}
\includegraphics[width = \linewidth]{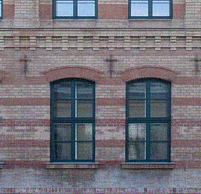} 
\caption{HORPCA-S}
\end{subfigure}\hfill
\begin{subfigure}[b]{.19\linewidth}
\includegraphics[width = \linewidth]{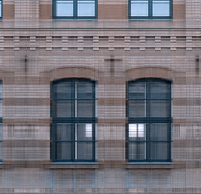} 
\caption{NCTRPCA}
\end{subfigure}\hfill
\begin{subfigure}[b]{.19\linewidth}
\includegraphics[width = \linewidth]{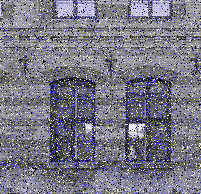} 
\caption{RNNDL}
\end{subfigure}\hfill
\begin{subfigure}[b]{.19\linewidth}
\includegraphics[width = \linewidth]{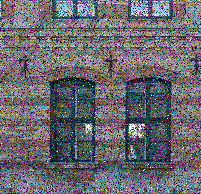} 
\caption{RPCA}
\end{subfigure}\hfill
\begin{subfigure}[b]{.19\linewidth}
\includegraphics[width = \linewidth]{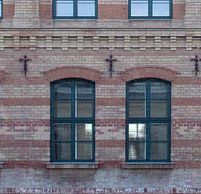}
\caption{KDRSDL}
\end{subfigure}
\begin{subfigure}[b]{.19\linewidth}
\includegraphics[width = \linewidth]{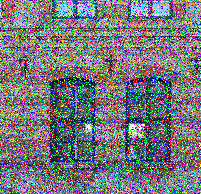}
\caption{Noisy}
\end{subfigure}
\begin{subfigure}[b]{.19\linewidth}
\includegraphics[width = \linewidth]{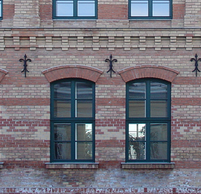} 
\caption{Original}
\end{subfigure}
\caption{Results on the Facade benchmark with 30\% noise.}
\label{fig:visual_facade_30}
\end{figure}



\begin{table}[h]
    \centering
    \begin{tabular}{cccc}
        \includegraphics[width = .20 \linewidth]{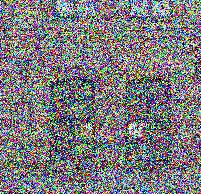} &
        \includegraphics[width = .20 \linewidth]{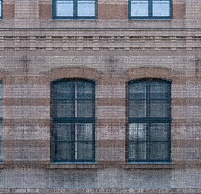} &
        \includegraphics[width = .20 \linewidth]{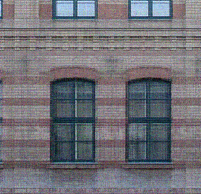} &
        \includegraphics[width = .20 \linewidth]{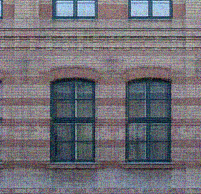}\\
         Noisy & KDRSDL & TRPCA '16 & HORPCA-S \Tstrut\Bstrut \\ \hline
         \textbf{PSNR} & 23.8064 & 23.6552 & 22.8811 \Tstrut \\
         \textbf{FSIMc} & 0.9152 & 0.9109 & 0.9060
    \end{tabular}
    \caption{Three best results on \textit{Facade} at 60\% noise.}
    \label{tab:top_3_facade_60}
\end{table}


In the gross-corruption case, KDRSDL was the best performer on both PSNR and FSIMc. As seen on Figure \ref{tab:top_3_facade_60}, KDRSDL was the only method with TRPCA '16 and HORPCA-S to provide a reconstruction with distinguishable details, and did it best.

\section{Conclusion}

The method we propose combines aspects from Robust Principal Component Analysis, and Sparse Dictionary Learning. As in K-SVD, our algorithm learns iteratively and alternatively both the dictionary and the sparse representations. Similarly to Robust PCA, our method assumes the data decompose additively in a sparse and low-rank component, and is able to separate the two signals. We introduce a two-level structure in the dictionary to allow for both scalable training, and robustness to outliers. Imposing this structure exhibits links with tensor factorizations and allows us to better model spatial correlation in images than classical matrix methods. These theoretical advantages translate directly in the experimental performance, as our method exhibits desirable scalability properties, and matches or outperforms the current state of the art in low-rank modeling.

\section{Acknowledgements}
The work of Y. Panagakis has been partially supported by the European Community Horizon 2020 [H2020/2014-2020] under Grant Agreement No. 645094 (SEWA). S. Zafeiriou was partially funded by EPSRC Project EP/N007743/1 (FACER2VM).

{\small
\bibliographystyle{ieee}
\bibliography{ms,fix}
}
\end{document}